\newcommand*{\mydoi}[1]{\href{http://dx.doi.org/#1}{\includegraphics[width=.75em]{doi.png}}}
\newcommand{\suchthat}{\;\ifnum\currentgrouptype=16 \middle\fi|\;}
\newcommand{\setdef}[2]{\{#1 \; | \; #2\}}
\newcommand{\map}[3]{#1: #2 \rightarrow #3}
\newcommand{\real}{\mathbb{R}}
\newcommand{\realnonnegative}{\mathbb{R}_{\geq0}}
\newcommand{\scirc}{\raise1pt\hbox{$\,\scriptstyle\circ\,$}}
\newcommand\oprocendsymbol{\hbox{$\square$}}
\newcommand\oprocend{\relax\ifmmode\else\unskip\hfill\fi\oprocendsymbol}
\definecolor{Gray}{gray}{0.9}
\DeclareSymbolFont{bbold}{U}{bbold}{m}{n}
\DeclareSymbolFontAlphabet{\mathbbold}{bbold}
\newcounter{saveenum}
\newtheorem{theorem}[saveenum]{Theorem}
\newtheorem{lemma}[saveenum]{Lemma}
\newtheorem{prop}[saveenum]{Proposition}
\newtheorem{mydefinition}[saveenum]{Definition}
\newcommand{\jac}[1]{D\mkern-0.75mu{#1}}
\newcommand{\seminorm}[1]{{\left\vert\kern-0.25ex\left\vert\kern-0.25ex\left\vert #1
		\right\vert\kern-0.25ex\right\vert\kern-0.25ex\right\vert}}
\newcommand{\semimeasure}[1]{\mu_{\seminorm{\cdot}}\kern-0.5ex\left(#1\right)}
\DeclareSymbolFont{bbold}{U}{bbold}{m}{n}
\DeclareSymbolFontAlphabet{\mathbbold}{bbold}
\newcommand{\model}{ELCD}
\title{Learning Neural Contracting Dynamics:
Extended Linearization and Global Guarantees}
\author{Sean Jaffe$^{1,2}$\thanks{Corresponding author: \texttt{sjaffe@ucsb.edu}}$\;\;$, Alexander Davydov$^{1}$,  Deniz Lapsekili$^{2}$, Ambuj K. Singh$^{2}$, and Francesco Bullo$^{1}$ \\ \\
  $^1$ Center for Control,
  Dynamical Systems and Computation, University of
  California, Santa Barbara.\\
  $^2$ Department of Computer Science, University of California, Santa
  Barbara.
}
\begin{document}

\maketitle

\begin{abstract}
Global stability and robustness guarantees in learned dynamical systems are essential to ensure well-behavedness of the systems in the face of uncertainty. We present Extended Linearized Contracting Dynamics (ELCD), the first neural network-based dynamical system with global contractivity guarantees in arbitrary metrics. The key feature of ELCD is a parametrization of the extended linearization of the nonlinear vector field. In its most basic form, ELCD is guaranteed to be (i) globally exponentially stable, (ii) equilibrium contracting, and (iii) globally contracting with respect to some metric. To allow for contraction with respect to more general metrics in the data space, we train diffeomorphisms between the data space and a latent space and enforce contractivity in the latent space, which ensures global contractivity in the data space. We demonstrate the performance of ELCD on the high dimensional LASA, multi-link pendulum, and Rosenbrock datasets.
\end{abstract}

\section{Introduction}

Due to their representation power, deep neural networks have become a popular candidate for modeling continuous-time dynamical systems of the form
\begin{equation}
    \dot{x} = \frac{dx}{dt} = f(x),
\end{equation}
where $f(x)$ is an unknown (autonomous) vector field governing a dynamical process. Beyond approximating the vector field $f$, it is desirable to ensure that the learned vector field is well-behaved. In many robotic tasks like grasping and navigation, a well-behaved system should always reach a fixed endpoint. Ideally, a learned system will still stably reach the desired endpoint even when pushed away from demonstration trajectories. Additionally, the learned system should robustly reach the desired endpoint in the face of uncertainty. In tasks such as manufacturing, animation, and human-robot interaction, functionality and safety require the learned system must smoothly follow a specific trajectory to its target.

To enforce stability guarantees, a popular approach has been to ensure that the learned dynamics admit a unique equilibrium point and have a Lyapunov function, e.g.~\citep{GM-JZK:19}. While popular, approaches based on Lyapunov functions typically struggle to provide general robustness guarantees since global asymptotic stability does not ensure robustness guarantees in the presence of disturbances. Indeed, input-to-state stability of global asymptotically stable dynamical systems needs to be separately established, e.g., see Chapter~5 in~\citep{HKK:02}.

To ensure robustness and to allow for smooth trajectory following, there has been increased interest in learning contracting dynamics from data~\cite{HBM-SH-GA-NF-GN-LR:23}. A dynamical system is said to be contracting if any two trajectories converge to one another exponentially quickly with respect to some metric~\citep{WL-JJES:98}. If a learned contracting system that admits a demonstration trajectory is pushed off that trajectory, it will follow a new trajectory that exponentially converges to the demonstration trajectory. Additionally, if a system is contracting, it is exponentially incrementally input-to-state stable~\citep{HT-SJC-JJES:21}. If the system is autonomous, it also admits a unique equilibrium, is input-to-state stable, and has two Lyapunov functions that establish exponential stability of the equilibrium~\citep{FB:23-CTDS}. Since establishing contractivity globally requires satisfying a matrix partial differential inequality everywhere, prior works have focused on establishing contractivity in the neighborhood of training data~\cite{VS-ST-MK:18,SS-BL-AM-JJES-MP:23}.

\subsection{Related Works}
\textbf{Stable, but not necessarily contracting dynamics.} Numerous works have aimed to learn stable dynamical systems from data including~\citep{KZSM-BA:11,GM-JZK:19,JU-MG-DT-JP:20-new,MAR-AL-DF-BB-FR-NDR:20,WZ-TL-LO-FR:22,PB-MB-SK-AL-SS-SH:22,FF-BY-DR-GS-IRM:22}. In~\citep{KZSM-BA:11}, the authors introduce the Stable
Estimator of Dynamical Systems (SEDS), which leverages a Gaussian mixture model to approximate the dynamics and enforce asymptotic stability via constraints on learnable parameters. In~\citep{GM-JZK:19}, the authors jointly learn the dynamics and a Lyapunov function for the system and project the dynamics onto the set of dynamics which enforce an exponentially decay of the Lyapunov function. This projection is done in closed-form and establishes global exponential convergence of the dynamics to the origin. In~\citep{JU-MG-DT-JP:20-new}, the authors introduce Imitation Flow where trajectories are mapped to a latent space where states evolve in time according to a stable SDE. In~\citep{MAR-AL-DF-BB-FR-NDR:20}, Euclideanizing Flows is introduced where the latent dynamics are enforced to follow natural gradient dynamics~\citep{SIA:98}. A similar approach is taken in~\citep{WZ-TL-LO-FR:22} where additionally collision avoidance is considered. 

\textbf{Existing works on learning contracting dynamics.} Learning contracting vector fields from demonstrations has attracted attention due to robustness guarantees~\citep{HR-IS-AD:17-new,VS-ST-MK:18,DS-SJ-CF:21,HT-SJC-JJES:21,HBM-SH-GA-NF-GN-LR:23}.
In~\citep{HR-IS-AD:17-new}, the dynamics are defined using a Gaussian mixture model and the contraction metric is parametrized to be a symmetric matrix of polynomial functions. Convergence to an equilibrium is established using partial contraction theory~\citep{WW-JJES:05}. In~\citep{VS-ST-MK:18}, the dynamics are defined via an optimization problem over a reproducing kernel Hilbert space; the dynamics are constrained to be locally contracting around the data points, i.e., there may be points in state space where the dynamics are not contracting. In~\citep{DS-SJ-CF:21} and~\citep{HT-SJC-JJES:21}, the authors study controlled dynamical systems of the form $\dot{x} = f(x,u)$, where $u$ is a control input and train neural networks to find a feedback controller such that the closed-loop dynamics are approximately contracting, i.e., contractivity is not enforced but instead lack of contractivity is penalized in the cost function. 

Recent work, \citep{HBM-SH-GA-NF-GN-LR:23}, 
has proposed a Neural Contractive Dynamical System (NCDS) which learns a dynamical system which is explicitly contracting to a trajectory. NCDS parametrizes contracting vector fields by learning symmetric Jacobians and performing line integrals to evaluate the underlying vector field. NCDS is computationally costly because of this integration. Additionally, the Jacobian parametrization used is overly restrictive, because not all contracting vector fields have symmetric Jacobians. Constraining the vector field to have symmetric Jacobian is equivalent to enforcing that the vector field is a negative gradient flow and contracting with respect to the identity metric~\citep{PMW-JJES:20}. For scalability to higher dimensions, NCDS leverages a latent space structure where an encoder maps the data space to a lower-dimensional space and enforces contracting dynamics in this latent space. Then a decoder ``projects" the latent-space dynamics to the full data space. It is then argued that on the submanifold defined by the image of the decoder, the dynamics are contracting.

\subsection{Contributions}
In this paper, we present a novel model for learning deep dynamics with global contraction guarantees. We refer to this model as Extended Linearized Contracting Dynamics (ELCD). To the best of our knowledge, ELCD is the first model to ensure global contraction guarantees.
To facilitate the development of this model, we provide a review of contracting dynamics and extended linearization of nonlinear dynamics. Leveraging extended linearization, we factorize our vector field as $f(x) = A(x,x^*)(x-x^*)$, where $x^*$ is the equilibrium of the dynamics. We enforce negative definiteness of the symmetric part of $A(x,x^*)$ everywhere and prove (i) global exponential stability of $x^*$, (ii) equilibrium contractivity of our dynamics to $x^*$, and (iii) using a converse contraction theorem, contractivity of the dynamics with respect to some metric.

Since negative definiteness of the symmetric part of $A$ is not sufficient to capture all contracting dynamics, we introduce a latent space of equal dimension as the data space and learn diffeomorphisms between the data space and this latent space. The diffeomorphisms provide additional flexibility in the contraction metric and allow learning of arbitrary contracting dynamics compared to those which are solely equilibrium contracting.

 Our example in Section~\ref{sec:example} provides theoretical justification for why the diffeomorphism and the learned contracting model must be trained jointly. In summary, if the diffeomorphism is trained first, and transforms the data to a latent space, the model class may not be expressive enough to accurately represent the true latent dynamics. If the diffeomorphism and dynamics are trained simultaneously, this limitation is overcome. This is in contrast to \citep{HBM-SH-GA-NF-GN-LR:23} which trains the diffeomorphism independently as a variational autoencoder and then trains the model after on the transformed data.

Our model, \model{}, directly improves on NCDS in several ways. We parameterize the vector field directly instead of parametrizing its Jacobian. Doing so prevents us from needing to integrate the Jacobian to calculate the vector field and thus speeds up training and inference. \model{} is also more expressive than NCDS because it can represent vector fields with asymmetric Jacobians. Additionally, \model{} is guaranteed to be contracting to an arbitrary equilibrium point, either selected or learned, at all training steps. NCDS, in contrast, must learn the equilibrium point over the course of training. Additionally, NCDS learns an encoder and decoder for a lower-dimensional latent space and thus can only be contracting on the submanifold defined by the image of the decoder. From initial conditions that are not on this submanifold, NCDS may not exhibit contracting behavior. In contrast, since the latent space of \model{} is of the same dimension as the data space, we train diffeomorphisms that ensure global contractivity in the data space. \model{} exhibits better performance than NCDS at reproducing trajectories from the LASA \citep{AL-YM-MKZ-TF-AB-JJS:15}, n-link Pendulum, and Rosenbrock datasets. We additionally compare against the Euclideanizing Flow~\cite{MAR-AL-DF-BB-FR-NDR:20}, and Stable Deep Dynamics~\cite{GM-JZK:19} models. 

\section{Preliminaries}

We consider the problem of learning a dynamical system $\dot{x} = f(x)$ using a neural network that ensures that the dynamics are contracting in some metric. Going forward, we denote by $\jac{f}(x) = \frac{\partial f}{\partial x}(x)$ the Jacobian of $f$ evaluated at $x$. To this end, we define what it means for a dynamical system to be contracting.


\begin{mydefinition} \label{def:contraction}
A contracting dynamical system is one for which any two trajectories converge exponentially quickly. 
From~\citep{WL-JJES:98}, for a continuously differentiable map $\map{f}{\real^d}{\real^d}$, the dynamical system $\dot{x} = f(x)$ is contracting with rate $c > 0$ if there exists a continuously-differentiable matrix-valued map $\map{M}{\real^d}{\real^{d \times d}}$ and two constants $a_0, a_1>0$ such that for all $x \in \real^n$, $M(x) = M(x)^\top$ and $a_0 I_d \succeq M(x) \succeq a_1 I_d$ and additionally satisfies for all $x$
\begin{equation}\label{eq:contraction-condition}
    M(x) \jac{f}(x) + \jac{f}(x)^\top M(x) + \dot{M}(x) \preceq -2cM(x).
\end{equation}
\end{mydefinition}

The map $M$ is called the \emph{contraction metric} and the notation $\dot{M}(x)$ is shorthand for the $n \times n$ matrix whose $(i,j)$ entry is $\dot{M}(x)_{ij} = \nabla M_{ij}(x)^\top f(x)$. A central result in contraction theory is that any dynamical system $\dot{x} = f(x)$ satisfying~\eqref{eq:contraction-condition} has any two trajectories converging to one another exponentially quickly~\citep{WL-JJES:98,IRM-JJES:17,HT-SJC-JJES:21}. Specifically, there exists $L \geq 1$ such that any two trajectories $x_1, x_2$ of the dynamical system satisfy
\begin{equation}\label{eq:incremental-stability}
    \|x_1(t) - x_2(t)\|_2 \leq L \mathrm{e}^{-ct}\|x_1(0)-x_2(0)\|.
\end{equation}
In other words, contractivity establishes exponential \emph{incremental} stability~\citep{DA:02}.

We note that any contracting dynamical system enjoys numerous useful properties including the existence of a unique exponentially stable equilibrium and exponential input-to-state stability when perturbed by a disturbance. We refer to~\citep{WL-JJES:98} for more useful properties of contracting dynamical systems and~\citep{FB:23-CTDS} for a recent monograph on the subject.

The problem under consideration is as follows: given a set of demonstrations $\mathcal{D} = \{(x_i, \dot{x}_i)\}_{i=1}^n$ consisting of a set of $n$ state, $x_i\in\real^d$, and velocity, $\dot{x}_i\in\real^d$, pairs, we aim to learn a neural network $f(x_i)=\dot{x}_i$ that parametrizes a globally contracting dynamical system with equilibrium point $x^*$, such that $f(x^*) =0$. In essence, this task requires learning both the vector field and the contraction metric, $M$ such that they jointly satisfy~\eqref{eq:contraction-condition}.

\subsection{Contracting Linear Systems}
Suppose we assumed that the dynamical system we aimed to learn was linear, i.e., $\dot{x} = Ax$ for some matrix $A \in \real^{d\times d}$ and we wanted to find a contraction metric $M$ which we postulate to be constant $M(x) := M$ for all $x$. The contraction condition~\eqref{eq:contraction-condition} then reads
\begin{equation}\label{eq:linear-contraction}
    M(A + cI_n) + (A + cI_n)^\top M \preceq 0,
\end{equation}
which implies that $A$ has all eigenvalues with $\mathrm{Re}(\lambda(A)) \leq -c$, see Theorem~8.2 in~\citep{JPH:09}. In other words, for linear systems, contractivity is equivalent to stability. 

Although the condition~\eqref{eq:linear-contraction} is convex in $M$ at fixed $A$, the task of learning both $(A,M)$ simultaneously from data is not (jointly) convex. Instead, different methods must be employed such as alternating minimization. A similar argument is made in~\citep{GM-JZK:19} in the context of stability and here we show that the same is true of contractivity.


\subsection{Contractivity and Exponential Stability for Nonlinear Systems}
In the case of nonlinear systems, contractivity is not equivalent to asymptotic stability. Indeed, asymptotic stability requires finding a continuously differentiable Lyapunov function $\map{V}{\real^d}{\realnonnegative}$ which is positive everywhere except at the equilibrium, $x^*$, and satisfies the decay condition
\begin{equation}
    \nabla V(x)^\top f(x) < 0, \quad \forall x \in \real^d \setminus \{x^*\}.
\end{equation}

One advantage of learning asymptotically stable dynamics compared to contracting ones is that the function $V$ is scalar-valued, while the contraction metric $M$ is matrix-valued.
A disadvantage of learning asymptotically stable dynamics compared to contracting ones is that we are required to know the location of the equilibrium point beforehand and no robustness property of the learned dynamics is automatically enforced. To this end, there are works in the literature that enforce an equilibrium point at the origin and learn the dynamics and/or the Lyapunov function under this assumption~\citep{GM-JZK:19}. In the case of contractivity, existence and uniqueness of an equilibrium point is implied by the condition~\eqref{eq:contraction-condition} and it can be directly parametrized to best suit the data. 

\section{Methods}
\subsection{Motivation}
The motivation for our approach comes from the well-known mean-value theorem for vector-valued mappings, which we highlight here.
\begin{lemma}[Mean-value theorem (Proposition~2.4.7 in~\citep{RA-JEM-TSR:88})]
    Let $\map{f}{\real^d}{\real^d}$ be continuously differentiable. Then for every $x,y \in \real^d$,
    \begin{equation}
        f(x) - f(y) = \Big(\int_0^1 \jac{f}(\tau x + (1-\tau)y) d\tau\Big)(x-y). 
    \end{equation}
\end{lemma}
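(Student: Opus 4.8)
The plan is to reduce this vector-valued identity to the one-dimensional fundamental theorem of calculus by restricting $f$ to the line segment joining $y$ and $x$. First I would introduce the straight-line path $\gamma(\tau) = \tau x + (1-\tau)y$ for $\tau \in [0,1]$, which satisfies $\gamma(0) = y$, $\gamma(1) = x$, and has the constant derivative $\gamma'(\tau) = x - y$. Composing with $f$, I would define the curve $g(\tau) = f(\gamma(\tau))$ in $\real^d$, which runs from $f(y)$ to $f(x)$ as $\tau$ goes from $0$ to $1$.

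The next step is to differentiate $g$ via the chain rule. Since $\map{f}{\real^d}{\real^d}$ is continuously differentiable, $g$ is continuously differentiable on $[0,1]$ with
\begin{equation}
    g'(\tau) = \jac{f}(\gamma(\tau))\,\gamma'(\tau) = \jac{f}(\tau x + (1-\tau)y)\,(x-y).
\end{equation}
Because $\jac{f}$ is continuous and $\gamma$ maps the compact interval $[0,1]$ into $\real^d$, the integrand $g'$ is continuous, hence integrable.

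Applying the fundamental theorem of calculus componentwise to each of the $d$ coordinate functions of $g$ then yields
\begin{equation}
    f(x) - f(y) = g(1) - g(0) = \int_0^1 g'(\tau)\, d\tau = \int_0^1 \jac{f}(\tau x + (1-\tau)y)\,(x-y)\, d\tau.
\end{equation}
Finally, since the vector $x - y$ does not depend on $\tau$, I would factor it out of the integral, using that the integral of a matrix-valued function is taken entrywise and matrix-vector multiplication is linear, so the two operations commute. This gives the claimed identity.

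There is no serious obstacle here; the proof is routine, and the only points requiring care are technical. One must justify that the entrywise integral of the matrix $\jac{f}(\gamma(\tau))$ commutes with right-multiplication by the fixed vector $x-y$, which is immediate from linearity, and that the $C^1$ hypothesis on $f$ guarantees a continuous integrand so that both the scalar fundamental theorem of calculus and the Riemann integral of the matrix-valued map are well-defined.
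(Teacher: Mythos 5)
Your proof is correct: the reduction to the scalar fundamental theorem of calculus along the segment $\gamma(\tau) = \tau x + (1-\tau)y$, applied componentwise after the chain rule, is exactly the standard argument, and your attention to continuity of the integrand and to factoring the constant vector $x-y$ out of the entrywise matrix integral covers the only technical points. The paper itself gives no proof of this lemma---it simply cites Proposition~2.4.7 of Abraham, Marsden, and Ratiu---and your argument is precisely the standard one found there, so there is nothing to reconcile.
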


If $y = x^*$ satisfies $f(x^*) = 0$, then the continuously differentiable map $f$ admits the factorization 
\begin{align}
    f(x) &= A(x,x^*)(x-x^*), \quad \text{ where } \\
    A(x,x^*) &= \int_0^1 \jac{f}(\tau x + (1-\tau)x^*)d\tau. \label{eq:avg-jacobian}
\end{align}

This factorization is referred to as \emph{extended linearization} in analogy to standard linearization, i.e., for $x$ close to $x^*$, $f(x) \approx \jac{f}(x^*)(x-x^*)$.
We remark that when $d \geq 2$, extended linearization is not unique, and for a given $f$, there may exist several $A$ such that $f(x) = A(x,x^*)(x-x^*)$. In other words,~\eqref{eq:avg-jacobian} showcases one valid choice for $A$ such that this factorization holds. Indeed, this nonuniqueness has been leveraged in some prior works, e.g. Section~3.1.3 in~\citep{HT-SJC-JJES:21}, to yield less conservative contractivity conditions.


Since we know that a contracting vector field admits a unique equilibrium point, $x^*$, we restrict our attention to learning a matrix-valued mapping $\map{A}{\real^d \times \real^d}{\real^{d \times d}}$ and ensuring that this mapping has enough structure so that the overall vector field satisfies the contraction condition~\eqref{eq:contraction-condition} for some contraction metric $M$. This task is nontrivial since $f(x) = A(x,x^*)(x-x^*)$ implies that 
\begin{equation}
    \jac{f}(x) = A(x,x^*) + \frac{\partial A}{\partial x}(x,x^*) (x-x^*),
\end{equation}
where $\map{\frac{\partial A}{\partial x}}{\real^{d} \times \real^d}{\real^{d \times d \times d}}$ is a third-order tensor-valued mapping. In what follows, we will show that a more simple condition will imply contractivity of the vector field. Namely, negative definiteness of the symmetric part of $A(x,x^*)$ will be sufficient for contractivity of the dynamical system $\dot{x} = A(x,x^*)(x-x^*)$.

\subsection{Parametrization of $A(x,x^*)$} \label{sec:param}
We show a simple example of our model, the Extended Linearized contracting Dynamics (ELCD). Let $x\in\real^{d}$ be the state variable and $f:\real^{d}\to\real^{d}$ be a vector field with equilibrium point $x^*$. As indicated, we parametrize our vector field by its extended linearization
\begin{equation}\label{eq:vectorfield}
    \dot{x} = f(x) = A(x, x^*)(x-x^*),
\end{equation}
where now
\begin{equation}\label{eq:l2-param}
\begin{aligned}
    A(x, x^*) &= -P_s(x, x^*)^\top P_s(x, x^*) \\ &\;\;+ P_a(x, x^*) - P_a(x, x^*)^\top - \alpha I_d
\end{aligned}
\end{equation}

$P_s, P_a: \real^{d} \times \real^d \to\real^{d\times d}$ are neural networks, $I_d$ is the $d$-dimensional identity matrix, and $\alpha > 0$ is a constant scalar. Note that the symmetric part of $A(x,x^*)$ is negative definite since
\begin{align*}
    \frac{A(x,x^*) + A(x,x^*)^\top}{2} = -P_s(x,x^*)^\top P_s(x,x^*) - \alpha I_d
\end{align*}
and $P_s(x,x^*)^\top P_s(x,x^*)$ is guaranteed to be positive semidefinite.

We prove that a vector field parametrized this way is guaranteed to be (i) globally exponentially stable, (ii) equilibrium contracting as defined in~\citep{AD-SJ-FB:20o}, and (iii) contracting in some metric. The key tools are partial contraction theory~\citep{WW-JJES:05} and a converse contraction theorem.

\begin{theorem}[Equilibrium Contraction and Global Exponential Stability]\label{thm:GES}
    Suppose the dynamical system $\dot{x} = f(x)$ is parametrized as $f(x) = A(x,x^*)(x-x^*)$ where $A(x)$ is as in~\eqref{eq:l2-param}. Then any trajectory $x(t)$ of the dynamical system satisfies
    \begin{equation}\label{eq:exp-stability}
        \|x(t)-x^*\|_2 \leq e^{-\alpha t}\|x(0)-x^*\|_2.
    \end{equation}
\end{theorem}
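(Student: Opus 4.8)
The plan is to run a direct Lyapunov argument using the quadratic candidate $V(x) = \tfrac{1}{2}\twonorm{x-x^*}^2$ and to show it decays along trajectories at rate $2\alpha$, from which~\eqref{eq:exp-stability} follows by the comparison (\GB) lemma. First I would record that $x^*$ is genuinely an equilibrium: from the factorization $f(x) = A(x,x^*)(x-x^*)$ we get $f(x^*) = A(x^*,x^*)\cdot 0 = 0$, so $x(t) \equiv x^*$ is a solution. Since $P_s,P_a$ are (at least locally Lipschitz) neural networks, $f$ is locally Lipschitz, guaranteeing local existence and uniqueness of trajectories, so the statement is well-posed before any decay estimate.

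The central computation is to differentiate $V$ along a solution $x(t)$:
\[
    \frac{d}{dt}V(x(t)) = (x(t)-x^*)^\top \dot{x}(t) = (x(t)-x^*)^\top A(x(t),x^*)\,(x(t)-x^*).
\]
Here I would invoke the elementary identity that a quadratic form depends only on the symmetric part of its matrix, $v^\top A v = v^\top \tfrac{A+A^\top}{2} v$ for all $v$. This is the crux of the argument and the whole point of the parametrization~\eqref{eq:l2-param}: the antisymmetric block $P_a - P_a^\top$ is annihilated by the quadratic form, so only the symmetric part $-P_s^\top P_s - \alpha I_d$ survives, giving
\[
    \frac{d}{dt}V(x(t)) = (x(t)-x^*)^\top\big(-P_s(x(t),x^*)^\top P_s(x(t),x^*) - \alpha I_d\big)(x(t)-x^*).
\]

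To conclude, I would use $P_s^\top P_s \succeq 0$ to bound the right-hand side by $-\alpha\,\twonorm{x(t)-x^*}^2 = -2\alpha V(x(t))$, yielding the scalar differential inequality $\dot V \leq -2\alpha V$. Applying the \GB comparison lemma gives $V(x(t)) \leq e^{-2\alpha t}V(x(0))$, and taking square roots produces exactly~\eqref{eq:exp-stability}. This bound also upgrades the local solution to a global one, since the trajectory stays in the bounded sublevel set $\{x : V(x)\leq V(x(0))\}$ for all $t\geq 0$, precluding finite-time blow-up.

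I do not expect a genuine obstacle: the only conceptual move is recognizing that the skew-symmetric term contributes nothing to the energy balance, which is precisely the design principle behind~\eqref{eq:l2-param}. The closest thing to a subtlety is the standard bookkeeping of justifying differentiability of $t \mapsto V(x(t))$ and promoting local to global existence, both of which follow from classical ODE theory once smoothness of the networks is assumed; neither requires the full contraction inequality~\eqref{eq:contraction-condition}, only the pointwise bound on the symmetric part of $A$.
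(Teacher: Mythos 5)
Your proof is correct, but it follows a genuinely different route from the paper's. The paper invokes partial contraction theory~\citep{WW-JJES:05}: it introduces the time-varying virtual system $\dot{y}(t) = A(x(t),x^*)(y(t)-x^*)$, observes that its Jacobian in $y$ is exactly $A(x(t),x^*)$ whose symmetric part is $\preceq -\alpha I_d$, concludes the virtual system is contracting in the identity metric with rate $\alpha$ via~\eqref{eq:incremental-stability}, and then obtains~\eqref{eq:exp-stability} by noting that $y_1(t)\equiv x^*$ and $y_2(t)=x(t)$ are both solutions of the virtual system. You instead run a direct Lyapunov argument on $V(x)=\tfrac12\twonorm{x-x^*}^2$, use the identity $v^\top A v = v^\top\tfrac{A+A^\top}{2}v$ to kill the skew term $P_a - P_a^\top$, and close with the \GB comparison lemma. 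Both proofs hinge on the identical pointwise fact --- the symmetric part of $A(x,x^*)$ in~\eqref{eq:l2-param} is $\preceq -\alpha I_d$ --- but your version is more elementary and self-contained: it does not need the incremental-stability theorem for time-varying contracting systems that underwrites the virtual-system step, and you additionally handle well-posedness (local Lipschitzness, and global existence via forward invariance of sublevel sets of $V$), which the paper leaves implicit. What the paper's framing buys in exchange is the explicit connection to partial contraction: the virtual system is contracting as a system, so the bound holds between \emph{any} two of its solutions, which situates Theorem~\ref{thm:GES} inside the standard contraction-theoretic toolkit and motivates the subsequent converse-contraction step; your argument, by picking the single pair $(x^*, x(t))$ up front, proves exactly the stated inequality and no more. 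Either proof is acceptable; yours is arguably the cleaner one for the statement as written.
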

\begin{proof}
    We use the method of partial contraction as in~\citep{WW-JJES:05}. Let $x(t)$ be a solution to $\dot{x}(t) = A(x(t),x^\star)(x(t)-x^*)$ with initial condition $x(0) = x_0$. Then, define the time-varying virtual system 
    \begin{equation}
        \dot{y}(t) = A(x(t),x^*)(y(t) - x^*).
    \end{equation}
    We will establish that this virtual system is contracting in the identity metric, i.e.,~\eqref{eq:contraction-condition} is satisfied with $M(x) = I_d$. We see that the Jacobian for this virtual system is simply $A(x(t),x^*)$ and
    \begin{align*}
        A(x(t),x^*) + A(x(t),x^*)^\top &= -2P_s(x(t),x^*)^\top P_s(x(t),x^*) - 2\alpha I_d \\ &\preceq -2\alpha I_d.
    \end{align*}
    In other words, in view of~\eqref{eq:incremental-stability}, and since $M(x) = I_d$, any two solution trajectories $y_1(t)$ and $y_2(t)$ of the virtual system satisfy 
    $$\|y_1(t) - y_2(t)\|_2 \leq e^{-\alpha t}\|y_1(0)-y_2(0)\|_2.$$
    Note that we can pick one trajectory to be $y_1(t) = x^*$ for all $t$ and we can pick $y_2(t) = x(t)$. Since $x_0$ was arbitrary, this argument establishes the claim.
\end{proof}
Clearly, the bound~\eqref{eq:exp-stability} implies exponential convergence of trajectories of the dynamical system~\eqref{eq:vectorfield} to $x^*$. Moreover, this bound exactly characterizes equilibrium contractivity, as was defined in~\citep{AD-SJ-FB:20o}. Notably, using the language of logarithmic norms, Theorem~33 in~\citep{AD-SJ-FB:20o} establishes a similar result to Theorem~\ref{thm:GES} without invoking a virtual system. 

Note that although Theorem~\ref{thm:GES} establishes global exponential stability and equilibrium contraction, it does not establish global contractivity. Indeed, the contractivity condition~\eqref{eq:contraction-condition} is not guaranteed to hold with $M(x) = I_d$ without further assumptions on the structure of $A(x,x^*)$ in~\eqref{eq:vectorfield}. Remarkably, however, due to a converse contraction theorem of Giesl, Hafstein, and Mehrabinezhad, it turns out that one can construct a state-dependent $M$ such that the dynamics are contracting. Specifically, since trajectories of the dynamical system satisfy the bound~\eqref{eq:exp-stability}, for any matrix-valued mapping $\map{C}{\real^d}{\real^{d \times d}}$ with $C(x) = C(x)^\top \succ 0$ for all $x$, the matrix PDE
\begin{equation}
    M(x)\jac{f}(x) + \jac{f}(x)^\top M(x) + \dot{M}(x) = -C(x)
\end{equation}
admits a unique solution for each $x$~\citep{PG-SH-IM:21}. In other words, the unique $M$ solving this matrix PDE serves as the contraction metric and will satisfy~\eqref{eq:contraction-condition} with suitable choice for $c$. The following proposition provides the explicit solution for $M$ in terms of the solution to the matrix PDE.

\begin{prop}[Theorem~2.8 in~\citep{PG-SH-IM:21}]
    Let $\map{C}{\real^d}{\real^{d \times d}}$ be smooth and have $C(x) = C(x)^\top$ be a positive definite matrix at each $x$. Then $\map{M}{\real^d}{\real^{d \times d}}$ given by the formula
    \begin{equation}\label{eq:metric}
        M(x) = \int_0^\infty \psi(\tau,x)^\top C(\phi(\tau,x))\psi(\tau,x) d\tau,
    \end{equation}
    is a contraction metric for the dynamical system~\eqref{eq:vectorfield} on any compact subset containing $x^*$, where $\tau \mapsto \phi(\tau,x)$ is the solution to the ODE with $\phi(0,x) = x$ and $\tau \mapsto \psi(\tau,x)$ is the matrix-valued solution to
    \begin{equation}
        \dot{Y} = \jac{f}(\phi(t,x))Y, \quad Y(0) = I_d.
    \end{equation}
\end{prop}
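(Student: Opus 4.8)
The plan is to treat \eqref{eq:metric} as a converse (Lyapunov-type) construction: $M(x)$ is the integral of the transported metric $\psi^\top C \psi$ along the flow, and I would verify in turn that (a) the integral converges and defines a smooth, symmetric, uniformly positive-definite and bounded matrix field on the compact set, and (b) that this $M$ solves the matrix Lyapunov PDE $M(x)\jac{f}(x) + \jac{f}(x)^\top M(x) + \dot M(x) = -C(x)$, from which the contraction condition \eqref{eq:contraction-condition} follows. Throughout I would use that $\psi(\tau,x) = D_x\phi(\tau,x)$ satisfies the variational equation $\dot Y = \jac{f}(\phi(\tau,x))Y$, $Y(0)=I_d$, together with the cocycle identities $\phi(\tau,\phi(t,x)) = \phi(\tau+t,x)$ and $\psi(\tau,\phi(t,x)) = \psi(\tau+t,x)\,\psi(t,x)^{-1}$ obtained by differentiating the semigroup property in $x$.

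First, for well-definedness, I would fix a compact, positively invariant set containing $x^*$ (for instance a sublevel set of $\|x-x^*\|_2$, which is forward invariant by \eqref{eq:exp-stability}). On this set I need a uniform exponential bound $\|\psi(\tau,x)\|_2 \le K e^{-\beta\tau}$ with $\beta>0$. Since $\jac{f}(x^*) = A(x^*,x^*)$ has symmetric part $\preceq -\alpha I_d$ and is therefore Hurwitz, the linearization gives such a bound on a neighborhood $U$ of $x^*$; for general $x$ the trajectory enters $U$ in a time bounded uniformly over the compact set, $\|\psi\|_2$ is bounded on that finite window by continuity, and the cocycle identity then propagates the decay to all $\tau$. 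Combined with the boundedness of $C$ along the relatively compact forward orbit, the integrand decays exponentially, so \eqref{eq:metric} converges and, by smooth dependence of $\phi,\psi$ on $x$ and dominated convergence, defines a smooth $M$. Symmetry of $M(x)$ is immediate from the form $\psi^\top C \psi$; positive-definiteness follows because $C \succ 0$ and $\psi(\tau,x)$ is invertible, and the uniform bounds $a_1 I_d \preceq M(x) \preceq a_0 I_d$ follow from continuity and compactness.

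Next, for the PDE, I would compute the orbital derivative $\dot M(x) = \frac{d}{dt}\big|_{t=0} M(\phi(t,x))$. Using the cocycle identities and the substitution $s=\tau+t$, one rewrites
\begin{equation*}
    M(\phi(t,x)) = \psi(t,x)^{-\top}\Big(\int_t^\infty \psi(s,x)^\top C(\phi(s,x))\psi(s,x)\,ds\Big)\psi(t,x)^{-1}.
\end{equation*}
Differentiating this product at $t=0$, where $\psi(0,x)=I_d$, the derivative of the inner integral contributes $-C(x)$ by the fundamental theorem of calculus, and the derivatives of $\psi(t,x)^{-\top}$ and $\psi(t,x)^{-1}$ contribute $-\jac{f}(x)^\top M(x)$ and $-M(x)\jac{f}(x)$ respectively, using $\frac{d}{dt}\psi(t,x)^{-1}\big|_{t=0} = -\jac{f}(x)$. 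This yields exactly $\dot M(x) = -\jac{f}(x)^\top M(x) - M(x)\jac{f}(x) - C(x)$, i.e. the Lyapunov PDE. Finally, since $C(x)\succ 0$ has a positive minimum eigenvalue and $M(x) \preceq a_0 I_d$ on the compact set, there is $c>0$ with $C(x) \succeq 2c\,M(x)$, so rearranging the PDE gives the contraction condition \eqref{eq:contraction-condition}.

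The main obstacle is the uniform exponential bound on the sensitivity matrix $\psi$: exponential stability of the nonlinear flow from Theorem~\ref{thm:GES} bounds trajectories but not \emph{a priori} the variational flow, and the structural estimate on the symmetric part of $A(x,x^*)$ controls $A$ rather than $\jac{f}$. I expect to resolve this by the local-linearization-plus-cocycle argument above, which is also precisely what forces the restriction to a compact set containing $x^*$. The remaining technical care is justifying differentiation under the integral sign for both the smoothness of $M$ and the orbital-derivative computation, which the exponential decay of the integrand and its $x$-derivatives supplies.
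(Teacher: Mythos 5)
The paper does not prove this proposition at all---it is imported verbatim as Theorem~2.8 of \citep{PG-SH-IM:21}---so there is no in-paper proof to compare against; your attempt can only be judged against the standard converse-contraction construction underlying that cited result, and it matches it. Your reconstruction is correct: the forward-invariant compact set from~\eqref{eq:exp-stability}, the uniform exponential bound on the variational flow $\psi$ (negativity of the symmetric part of $\jac{f}$ near $x^*$, bounded entry time, and the cocycle identity), the resulting well-definedness, symmetry, and uniform positive-definiteness of $M$, the shift identity whose $t$-derivative at $t=0$ yields the Lyapunov PDE $M(x)\jac{f}(x) + \jac{f}(x)^\top M(x) + \dot M(x) = -C(x)$, and the final compactness step $C(x) \succeq 2c\,M(x)$ giving~\eqref{eq:contraction-condition} are exactly the right ingredients, with the technical points you flag (differentiation under the integral sign, decay of $x$-derivatives of the integrand) being routine to fill in.
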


While it is challenging, in general, to compute the metric~\eqref{eq:metric}, numerical considerations for approximating it arbitrarily closely are presented in~\citep{PG-SH-IM:23}. In practice, one would select $C(x) = I_d$ and evaluate all integrals numerically. For ELCD, unless one directly needs to know the contraction metric for application purposes, it is not required to compute the contraction metric at any point during either training or inference.

We remark that our parametrization for $A$ in~\eqref{eq:l2-param} is
similar to the parametrization for the Jacobian of $f$ that was presented
in~\citep{HBM-SH-GA-NF-GN-LR:23}. There are however a few key differences. Notably,
$A(x,x^*)$ may be asymmetric while the Jacobian in equation (3)
in~\citep{HBM-SH-GA-NF-GN-LR:23} is always symmetric. Notably, since the Jacobian
in~\citep{HBM-SH-GA-NF-GN-LR:23} is symmetric and negative definite, the vector
field $\dot{x} = f(x)$ is a negative gradient flow, $\dot{x} = -\nabla
V(x)$, for some strongly convex function $V$. On the contrary, our
dynamics~\eqref{eq:vectorfield} can exhibit richer behaviors than negative
gradient flows in view of the asymmetry in $A(x,x^*)$. Additionally, since
the dynamics in~\citep{HBM-SH-GA-NF-GN-LR:23} with their parametrization can be
represented as $\dot{x} = -\nabla V(x)$ for some strongly convex $V$, it is
guaranteed to be contracting in the identity metric, $M(x) =
I_d$~\citep{PMW-JJES:20}. On the other hand, our
dynamics~\eqref{eq:vectorfield} is not necessarily contracting in the
identity metric and instead is contracting in a more complex metric given
in~\eqref{eq:metric}.


\subsection{Latent Space Representation}
Realistic dynamical systems and their flows including the handwritten trajectories found in the LASA dataset, are often highly-nonlinear and may not be represented in the form~\eqref{eq:vectorfield} or obey the bound~\eqref{eq:exp-stability}. One solution to these challenges is to transform the system to a latent, possible lower-dimensional, space and learn an \model{} in the latent space.

Latent space learning is possible because contraction is invariant under differential coordinate changes. From Theorem 2 of \citep{IRM-JJES:17}, given a dynamical system $\dot{x} = f(x)$, $f:\real^{d}\to\real^{d}$, with $f$ satisfying \eqref{def:contraction}, the system will also be contracting under the coordinate change $z = \phi(x)$ if $\map{\phi}{\real^d}{\real^d}$ is a smooth diffeomorphism. Specifically, if $\dot{x} = f(x)$ is contracting with metric $M$, then the system that evolves $z$ is contracting as well with metric given by $\tilde{M}(z) = \jac{\phi}(z)^{-\top}M(z)\jac{\phi}(z)^{-1}$, where $z = \phi(x)$ and $\jac{\phi}$ is the Jacobian of the coordinate transform. In other words, We can learn vector fields that are contracting in an arbitrary metric by training a vector field $f$ which is parametrized as~\eqref{eq:vectorfield} and using a coordinate transform $\phi$.

NCDS \citep{HBM-SH-GA-NF-GN-LR:23}, treats the coordinate transform as a Variational Autoencoder (VAE) \citep{DPK-MK:14}. Their training procedure consists of two steps: first training the coordinate transform with VAE training, then training the function $f$ in the new learned coordinates.

\subsection{On the Interdependence of Diffeomorphism and Learned Dynamics}\label{sec:example}
To demonstrate the need for a coordinate transform, we consider the task of learning a vector field that fits trajectories generated by the linear system $\dot{x} = Ax$ with  $A = \begin{pmatrix} -1 & 4 \\ 0 & -1 
    \end{pmatrix}.$
Clearly, this linear system cannot be represented in the form~\eqref{eq:vectorfield} with parametrization~\eqref{eq:l2-param}. To see this fact, we observe that $A + A^\top$ is not negative definite and thus no choice of $P_s$ or $P_a$ can represent this linear system. To remedy this issue, one can take the linear coordinate transform $z = \phi(x) = Px$, where $P = \begin{pmatrix} 1 & 0 \\ 0 & 4 \end{pmatrix}$. Then the $z$-dynamics read
\begin{equation}
    \dot{z} = PAP^{-1}z = \begin{pmatrix} -1 & 1 \\ 0 & -1 \end{pmatrix}z 
\end{equation}
and now the symmetric part of $PAP^{-1}$ is negative definite and thus we can find suitable choices of $P_s,P_a$. Specifically, we can take $\alpha \in (0, 1/2)$, let $P_a(z) = \begin{pmatrix} 0 & 0 \\ 1/2 & 0 \end{pmatrix}z$, and let $P_s(z) = Qz$ where $Q$ is the matrix square root of $\begin{pmatrix} 1 & -1/2 \\ -1/2 & 1 \end{pmatrix} - \alpha I_2$. Note that in this toy example, asymmetry is essential to exactly represent these dynamics in the latent space, $z$. If we used the parametrization in~\citep{HBM-SH-GA-NF-GN-LR:23}, we would not be able to represent these dynamics since they have an asymmetric Jacobian.
\begin{figure*}
    \centering
    \caption{The learned vector field and corresponding trajectories of an \model{} with no transform (left) and a model with a transform (right) when trained on data that is generated from a vector field that is contracting in a more general metric.}
    \begin{tabular}{cc}
                  \includegraphics[height = 3.5cm]{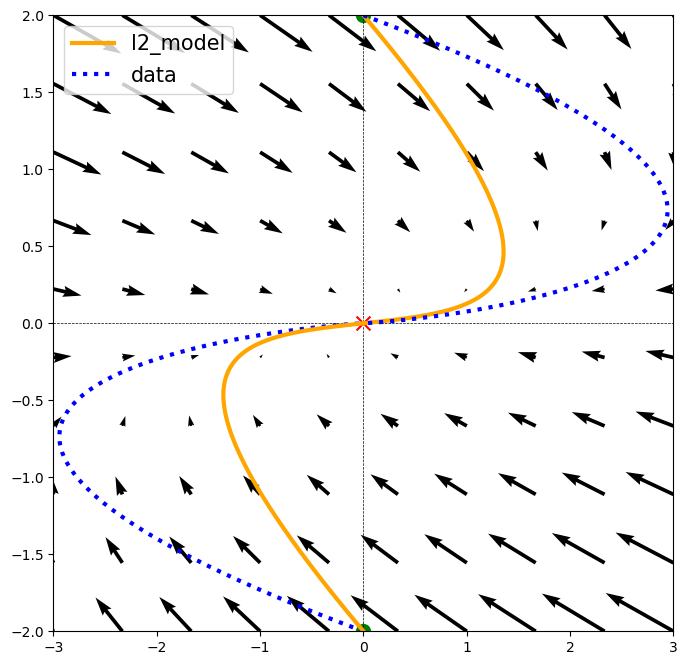} &
          \includegraphics[height = 3.5cm]{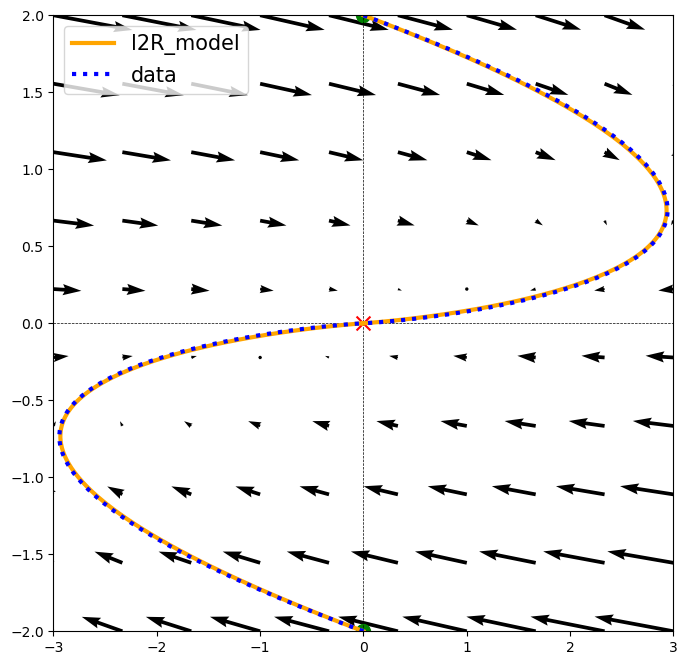}

    \end{tabular}
    
    \label{fig:demo}
\end{figure*}
 We demonstrate this routine in Figure~(\ref{fig:demo}). We generate two trajectories starting at $(0,2)$ and $(0, -2)$, and use that data to train two \model{}s, one without and one with a learned, linear transform. Figure \ref{fig:demo}(a) shows the vector field and trajectories corresponding to an \model{} with no transform. The trajectories are forced to the center sooner than the actual data as a consequence of the bound~\eqref{eq:exp-stability}. Learning a coordinate transform allows the \model{} to learn a system that is contracting in an arbitrary metric and exhibit overshoot. This is demonstrated by the learned system in Figure~\ref{fig:demo}(b), which perfectly matches the data.

\subsection{Choice of Diffeomorphism}

There are several popular neural network diffeomorphisms including coupling layers \citep{LD-JSD-SB:16-new, MAR-AL-DF-BB-FR-NDR:20}, normalizing flows \citep{JU-MG-DT-JP:20-new}, $\mathcal{M}$-flow \citep{JN-KC:20,HBM-SH-GA-NF-GN-LR:23}, spline flows \citep{CD-AB-IM-GP:19-2}, and radial basis functions \citep{HBM-SH-GA-GN-LR:23}.

A coupling layer $\phi: \real^d \to \real^d$ consists of a neural network $\theta:\real^k \to \real^N$ with $1 < k < d$ and a neural network $g:\real^N \times \real \to\real$. The transform $\phi$ maps input $x\in\real^d$ to $y\in\real^d$ with the following procedure:

\begin{enumerate}
    \item Set $y_i = x_i$ for $1 \leq i \leq k$ for some $1< k < d$.
    \item Set $y_i = g(x_{1:k}, x_i)$ for $k \leq i \leq d$
\end{enumerate}
Coupling layers are invertible by doing the above process in reverse, so long as $g$ is invertible. A coupling layer can exhibit a wide variety of behaviors depending on the choice of $g$. 

Polynomial spline curves are another diffeomorphism that are commonly  used as $g$ in coupling layers. A polynomial spline has a restricted domain which is divided into bins. A different linear, quadratic \citep{TM-BM-FR-MG-JN:19}, or cubic \citep{CD-AB-IM-GP:19-1} polynomial covers each bin. \citep{CD-AB-IM-GP:19-2} introduce rational-quadratic splines, which constructs a spline from functions that are the quotient of two quadratic polynomials.

In practice, $\phi$ is the composition of several of transforms. Because the coupling transform only alters some coordinates, they are often used in conjunction with random permutations. If the latent space dimension is smaller than the data space dimension \citep{HBM-SH-GA-NF-GN-LR:23} makes the last composite function of $\phi$ an 'Unpad' function, which simply removes dimensions to match the latent space. The decoder $\phi$ is then prepended by a 'Pad' function, which concatenates the latent variable with the necessary number of 'zeros' to match the data space. 

Of course, if the latent dimension is smaller than that of the data dimension, then $\phi$ can not be bijective. However, \citep{HBM-SH-GA-NF-GN-LR:23} argue that as long as $\phi$ is injective and has range over the dataset, then $f$ being contracting in the latent space implies that the learned dynamics are contracting on the submanifold defined by the image of $\phi^{-1}$. In other words, NCDS cannot be globally contracting. The consequences of this are shown in Figure \eqref{fig:epoch}. While NCDS can learn to admit an equilibrium point when on the submanifold, trajectories that fall off the submanifold may diverge. \model{}, in contrast, is contracting to the correct equilibrium point during all phases of training.

\renewcommand{\tabcolsep}{1pt}
\begin{figure*}[h]
    \centering
    \caption{Plots of the vector fields and induced trajectories learned by \model{} (Top) and NCDS (Bottom) after different training epochs. \model{} is contracting always while NCDS may admit multiple equilibrium or diverge. }
    \begin{tabular}{ccc}
        Epoch 1 & Epoch 2 & Epoch 3\\
        \includegraphics[align=c, height=2cm]{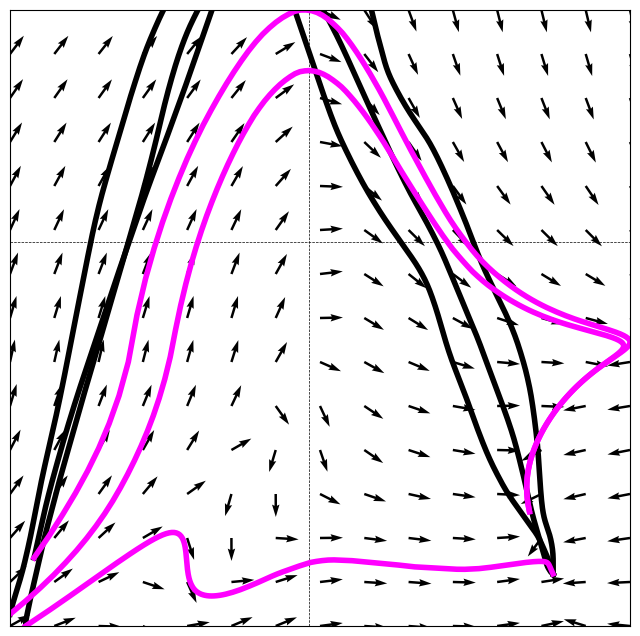} &  
        \includegraphics[align=c, height=2cm]{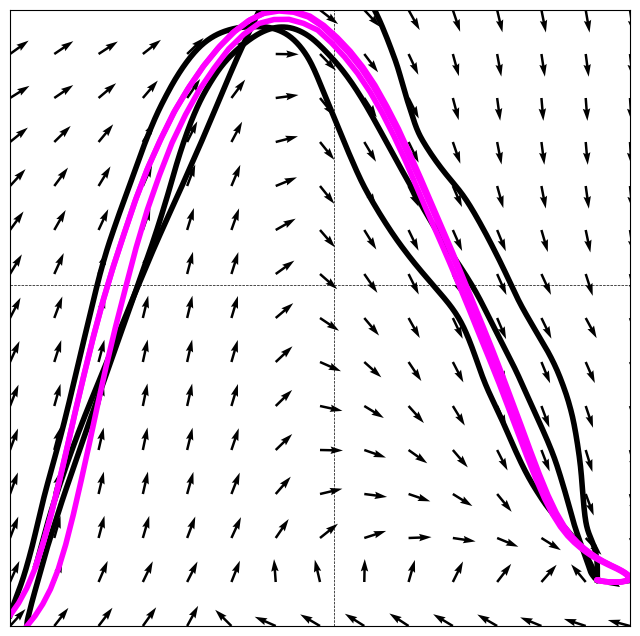} & 
        \includegraphics[align=c, height=2cm]{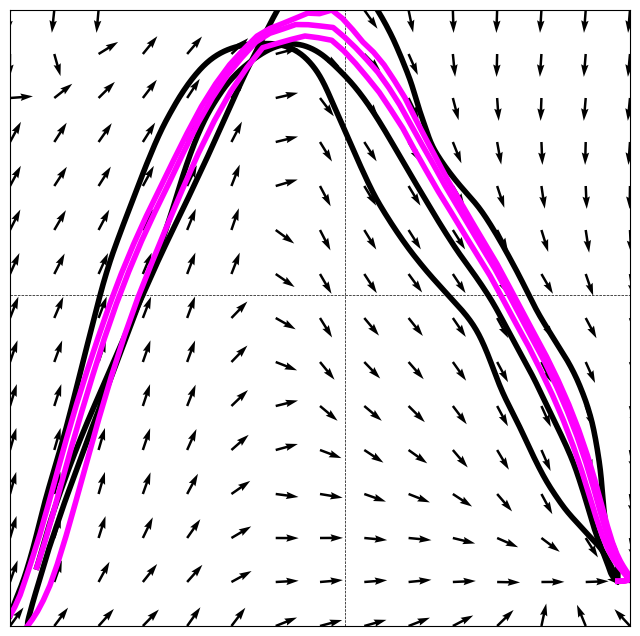} 

        \\
         \includegraphics[align=c, height=2cm]{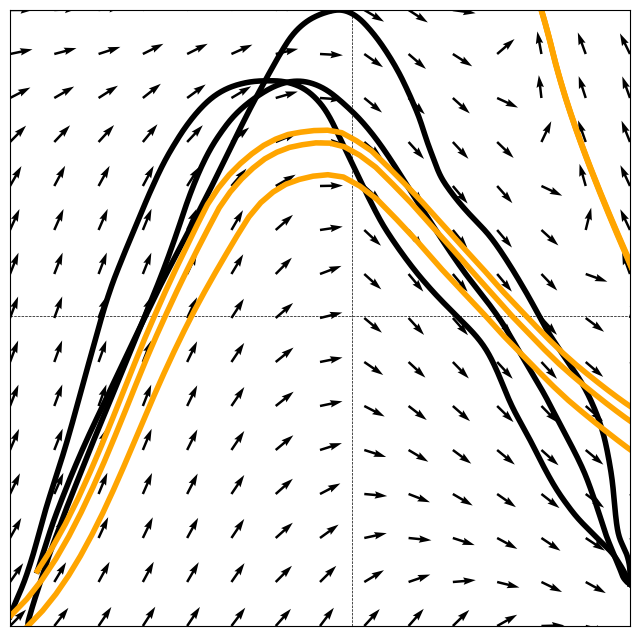} &  
        \includegraphics[align=c, height=2cm]{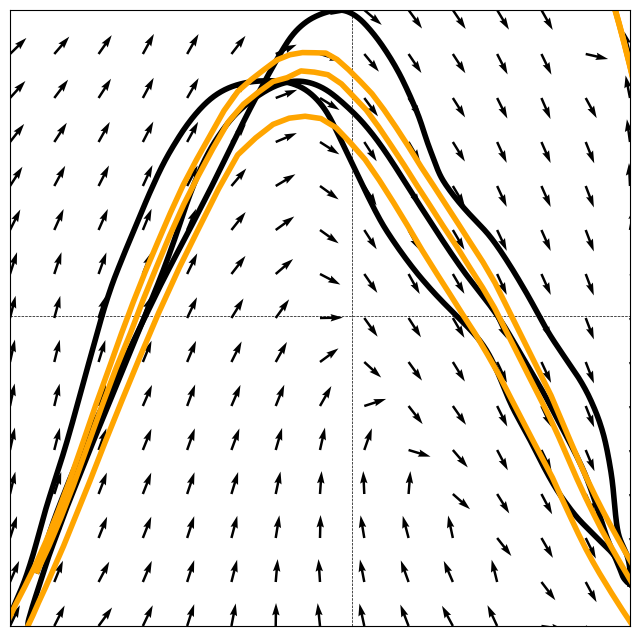} & 
        \includegraphics[align=c, height=2cm]{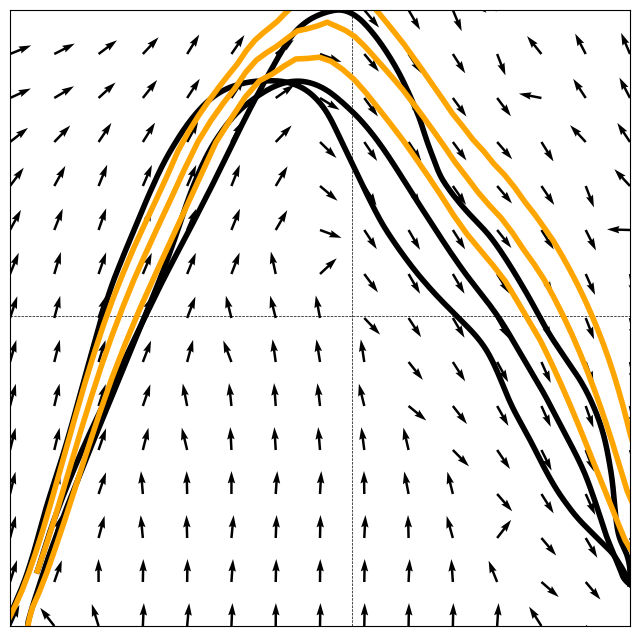} 

    \end{tabular}
    \label{fig:epoch}
\end{figure*}

\subsection{Training}
Our task is to simultaneously learn the contracting system $f(x)$ and the metric in which $f$ contracts. As previously discussed, the metric is implicitly determined by $\phi(x)$. \citep{HBM-SH-GA-NF-GN-LR:23} uses a two-step method. First, they treat $\phi$ as a variational autoencoder and maximize the evidence lower bound (ELBO). They then fix the encoder and train the model to evolve the state in latent space. Note that the VAE objective is to make the encoded data to match a standard-normal distribution. Notably, VAE training does not encourage the encoder to transform the data to space in which the data corresponds to trajectories of a contracting system. If the data is not contracting in the transformed space, a contracting model will not be able to fully fit the data. This explains why, in our implementation of the two step-training, we are unable to reasonably learn the data. For further comparison with NCDS, we will train the encoder and model jointly.

\section{Experiments}

The code for our model and data can be found here: \url{https://github.com/seanjaffe1/Extended-Linearized-Contracting-Dynamics}. For all datasets we compare our method against NCDS ~\cite{HBM-SH-GA-NF-GN-LR:23}, Stable Deep Dynamics (SDD)~\cite{GM-JZK:19} and Euclideanizing Flow (EFlow)~\cite{MAR-AL-DF-BB-FR-NDR:20}. See \ref{sec:compare} for a detailed discussion of the methods. We report the dynamic time warping distance (DTWD) \citep{EK-CAR:05} (see \ref{sec:metric}) and standard deviation between predicted and data trajectories. See \ref{sec:model} for model implementation details. See also the Appendix for more details on these models.
\renewcommand{\tabcolsep}{1pt}
\begin{figure*}[h]
    \centering
    \caption{Plots of the 2D LASA data. Demonstrations are in black. The learned \model{} trajectories in magenta are plotted along with the learned vector field. The vector field velocities have been normalized for visibility.}
    \begin{tabular}{ccccc}
        \includegraphics[align=c, height=2.5cm]{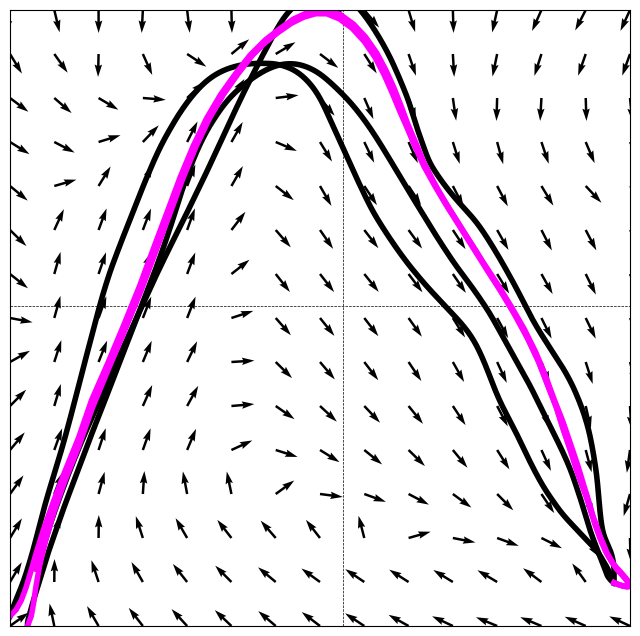} &  
        \includegraphics[align=c, height=2.5cm]{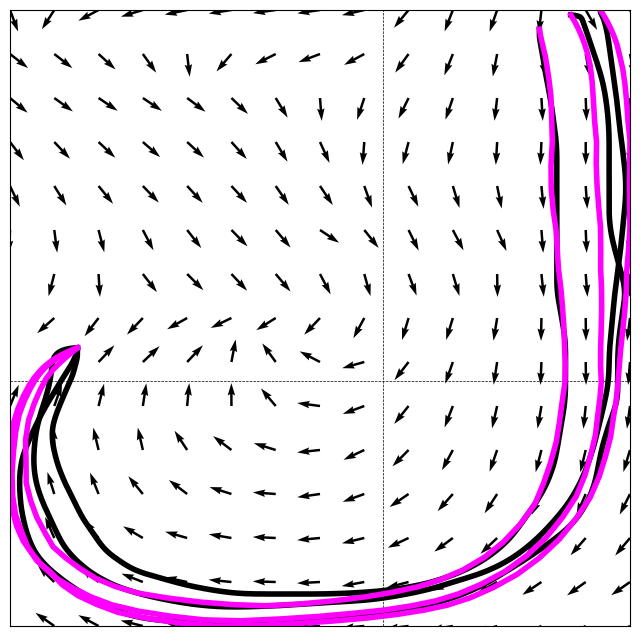} & 
        \includegraphics[align=c, height=2.5cm]{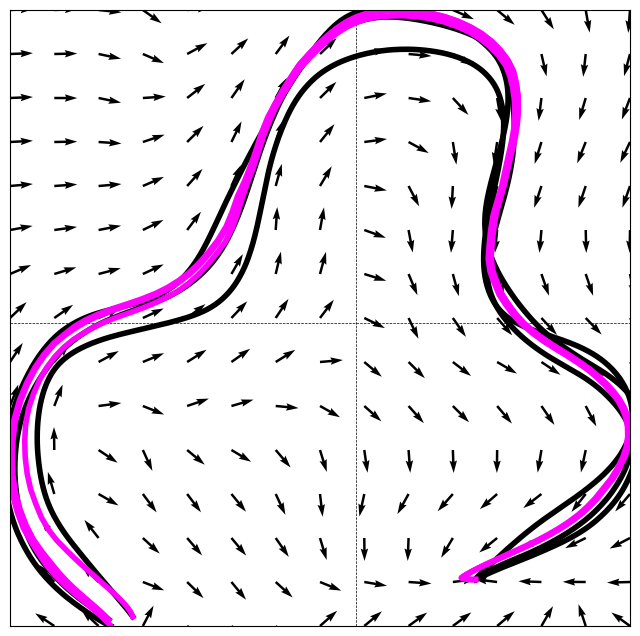} & 
        \includegraphics[align=c, height=2.5cm]{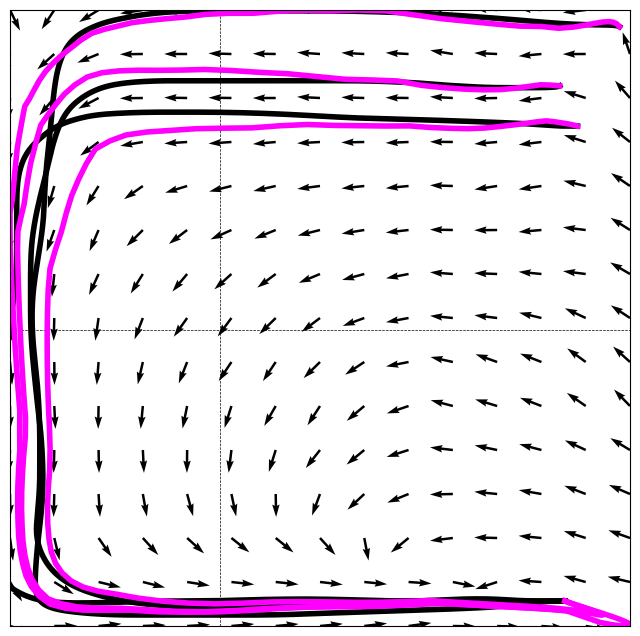} & 
        \includegraphics[align=c, height=2.5cm]{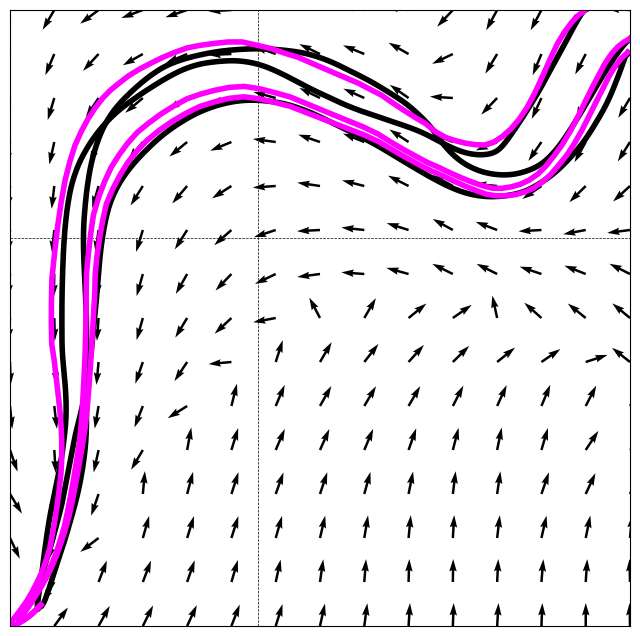} 
    \end{tabular}
    
    \label{fig:2d-all}
\end{figure*}
\subsection{Datasets}
We experiment with the LASA dataset \citep{AL-YM-MKZ-TF-AB-JJS:15}, which consists of 30, two-dimensional curves. We use three demonstration trajectories of each curve. As in \citep{HBM-SH-GA-NF-GN-LR:23}, the first few initial points of each trajectory are omitted so that only the target state has zero velocity.  we stack two and four LASA trajectories together to make datasets of 4 and 8-dimensional trajectories, respectively. All data is standardized to have a mean of zero and variance of one. We use in total 10 2D curves, 6 4D curves, and 6 8D curves. Some 2D-LASA trajectories and their respective trained \model{} trajectories and vector fields are visualized in Figure \eqref{fig:2d-all}.

We also experiment with simulated datasets. We simulate 6 trajectories of a 2,4, and 8-link pendulum (4D, 8D, and 16D respectively) each and 4 trajectories of 8D and 16D Riemannian gradient descent dynamics on a generalization of the Rosenbrock function (see Appendix~\ref{app:datasets}). Each model is trained on all trajectories of the same dimension, and then predictions are made starting from every initial point. 




\begin{table}
\centering
\caption{Mean DTWD $\pm$ one standard deviation across 10 runs on LASA, multi-link pendulum, and Rosenbrock datasets}
\begin{tblr}{
    hline{1-2,5, 8, 10} = {1-5}{},
}
     & SDD & EFlow & NCDS &  ELCD  \\
LASA-2D & 0.37 $\pm$ 0.32 & 1.05 $\pm$ 0.25 & 0.59 $\pm$ 0.61 & \textbf{0.12 $\pm$0.11} \\
LASA-4D & 2.49 $\pm$ 2.4 & 2.24 $\pm$ 0.12 & 2.19 $\pm$ 1.23 & \textbf{0.80 $\pm$ 0.54} \\
LASA-8D & 5.26 $\pm$ 0.50 & 2.66 $\pm$ 0.63 & 5.04 $\pm$ 0.77 & \textbf{1.52 $\pm$ 0.61}\\
Pendulum-4D &0.49 $\pm$ 0.11  & 0.17 $\pm$ 0.01 &  1.35 $\pm$ 2.26&\textbf{0.03 $\pm$ 0.01}\\
Pendulum-8D & 0.75 $\pm$ 0.08 & 0.33 $\pm$ 0.01 & 2.88 $\pm$ 0.69 & \textbf{0.14 $\pm$ 0.03}\\
Pendulum-16D  & 1.86 $\pm$ 0.14  &0.45 $\pm$ 0.01&  1.65 $\pm$ 0.31 & \textbf{0.44 $\pm$ 0.09}\\
Rosenbrock-8D & NaN & 1.90 $\pm$ 0.16 & 2.74 $\pm$ 0.15 & \textbf{1.22 $\pm$ 0.01} \\
Rosenbrock-16D & NaN & 3.57 $\pm$ 0.66 & 3.68 $\pm$ 0.12 & \textbf{2.57 $\pm$ 0.09}

\end{tblr}

\label{tbl:updated_results}
\end{table}

\subsection{Results}

Table \ref{tbl:updated_results} presents our results. \model{} performs the best in all tasks. These results shows the benefit of the increased expressiveness allowed by the skew symmetric component of our parametrization. These benefits are more apparent in the pendulum dataset. The oscillatory behavior of the pendulum dynamics is a product of complex eigenvalues in its Jacobian. Our model's skew symmetric component is what enables it to learn the pendulum dynamics so well. A sample of demonstration and learned pendulum trajectories is shown in figure \eqref{fig:pendres}.

The Rosenbrock dynamics are stiff and difficult to learn. In our training of SDD, we observed lack of stability and convergence, resulting in very large DTWDs. Hence, we report NaN for the results of the SDD models on the Rosenbrock dataset. While our model performs the best, there is still room for improvement. Handling such dynamics with multiple time scales is still an open challenge.

\begin{figure}
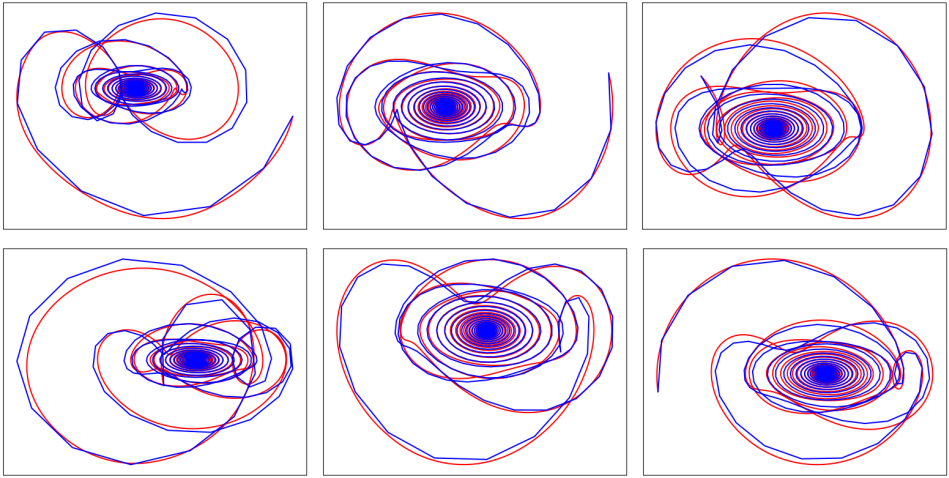
 
    \centering
    \caption{Phase plots of the two link-pendulum trajectories (blue) and the trajectories produced by ELCD (red). Each column is a different trajectory. The top row is the first link and the bottom row is the second link. The x-axis is angle and the y-axis is angular velocity.}

    \begin{tabular}{ccc}
         \includegraphics[scale=.23]{figs/pend/pend_1_1.png} &
         \includegraphics[scale=.23]{figs/pend/pend_2_1.png} & 
         \includegraphics[scale=.23]{figs/pend/pend_3_1.png}\\
         \includegraphics[scale=.23]{figs/pend/pend_1_2.png} &
         \includegraphics[scale=.23]{figs/pend/pend_2_2.png} & 
         \includegraphics[scale=.23]{figs/pend/pend_3_2.png}
    \end{tabular}
    \label{fig:pendres}
\end{figure}


\section{Limitations} \label{sec:limitations}

Our method assumes that the underlying dynamics of the data trajectories are contracting. However, the diffeomorphism allows for a wide class of stable systems to be represented. Our method is currently implemented for trajectories that converge to the same fixed point. But, this can be overcome by manually changing the fixed point, depending on which trajectory the input data came from. Also, right now our method is limited to dynamics in $\real^n$, but we imagine that an extension to more general manifolds should be possible.

As we are motivated by applications in imitation learning and robotics, in this work, we are primarily interested in problems where the underlying systems should be robustly stable, especially away from training data. For this reason, we focus on learning dynamics which are guaranteed to be globally contracting and do not address learning other types of systems, such as those with multiple fixed points, limit cycles, or chaotic attractors.  We imagine that extensions of ELCD could capture these richer dynamical behaviors by leveraging weaker notions of contraction including local contraction (i.e., contractivity in the region of attraction of a stable equilibrium), $k$-contraction (contraction of $k$-dimensional bodies) \citep{CW-IK-MM:22}, transverse contraction \citep{IRM-JJES:14} and contraction in the Hausdorff dimension \citep{CW-RP-MM-JJES:20}. Moreover, the notion of translation invariance mentioned could be studied using semicontraction theory \citep{GDP-KDS-FB-MEV:21m}, i.e., contraction to a subspace.

\section{Conclusions}
In this paper, we introduce ELCD, the first neural
network-based dynamical system with global contractivity guarantees in arbitrary metrics. The main theoretical tools are extended linearization, equilibrium contraction, and a converse contraction theorem. To allow for contraction with respect to more general metrics, we use a latent space representation with dimension of the latent space equal to the dimension of the data space and train diffeomorphisms between these spaces. We highlight key advantages of ELCD compared to NCDS as introduced in~\citep{HBM-SH-GA-NF-GN-LR:23}, including global contraction guarantees, expressible parametrization, and efficient inference. We demonstrate
the performance of ELCD on high-dimensional LASA datasets and simulated multi-link pendulum and Rosenbrock dynamics. Our method shows consistent performance across all datasets.

\begin{ack}
This work was supported in part by the NSF Graduate Research Fellowship
under grant 2139319, the AFOSR under award FA9550-22-1-0059, the NSF under
grant no.~2229876, the Department of Homeland Security, and by IBM. Any
opinions, findings, and conclusions or recommendations expressed in this
material are those of the author(s) and do not necessarily reflect the
views of the NSF or its federal agency and industry partners.
\end{ack}


\bibliography{alias,main,FB,sean}
\bibliographystyle{plainnat}


\newpage 

\appendix

\section{Appendix}

\subsection{Metric} \label{sec:metric}
We evaluate trajectories by comparing them against their reference trajectory using the normalized dynamic time warping distance (DTWD) \citep{EK-CAR:05}. For two trajectories $x = \{x_i\}_{i\in\{1,...,t_1\}}$, $\Bar{x} = \{\Bar{x}_i\}_{i\in\{1,...,t_2\}}$ and some distance function $d(\cdot, \cdot)$, let DTWD be
\begin{align}
     \textbf{DTWD}(x, \Bar{x}) &= \frac{1}{t_1}\sum_{i=1}^{t_1} \Big(\min_{\Bar{x}_j \in \Bar{x}} d(x_i, \Bar{x}_j)\Big)  \nonumber +  \frac{1}{t_2}\sum_{i=1}^{t_2} \Big(\min_{x_j \in x} d(\Bar{x}_i, x_j)\Big)
\end{align}

 \subsection{Comparisons to Other Models}\label{sec:compare}

 In this section, we describe the models that we compare to, namely EFlow~\cite{MAR-AL-DF-BB-FR-NDR:20}, SDD~\cite{GM-JZK:19}, and NCDS~\cite{HBM-SH-GA-NF-GN-LR:23}.

 \textbf{Euclideanizing Flow: }Starting with EFlow, they parametrize their dynamical system by
 \begin{equation}\label{eq:Eflow}
     \dot{x} = -G_{\psi}(x)^{-1} \nabla \Phi(\psi(x)),
 \end{equation}
 where $\map{\psi}{\real^d}{\real^d}$ is a diffeomorphism, $\map{\Phi}{\real^d}{\real}$ is a convex potential function, and $G_{\psi}(x) = \jac{\psi}(x)^\top \jac{\psi}(x)$ is the induced Riemmanian metric. The dynamics~\eqref{eq:Eflow} then has the interpretation of being the steepest descent for $\Phi \circ \psi$ on the Riemmanian manifold defined by metric $G_{\psi}$. Specifically, in~\cite{MAR-AL-DF-BB-FR-NDR:20}, the convex potential function is defined to be $\Phi(y) = \|y - y^\star\|_2$, where $y^\star = \psi(x^\star)$ and the diffeomorphism is parametrized via a coupling layer~\cite{LD-JSD-SB:16-new}. Note that contraction properties of dynamics of these form were studied in~\cite{PMW-JJES:20}. Since $\Psi$ is chosen to be convex but not strongly convex, these dynamics are only \emph{weakly contracting}, i.e., satisfy~\eqref{eq:contraction-condition} with $c = 0$.

\textbf{Stable Deep Dynamics: }
In~\cite{GM-JZK:19}, the authors parametrize both an unconstrained vector field $\map{\hat{f}}{\real^d}{\real^d}$ and a Lyapunov function $\map{V}{\real^d}{\realnonnegative}$ via neural networks. Then to enforce global exponential stability, at every point $x \in \real^d$, they project $\hat{f}(x)$ onto the convex set of points $\setdef{u \in \real^d}{\nabla V(x)^\top u \leq -\alpha V(x)}$. They do this projection in closed-form so that the dynamics have the representation
\begin{equation}\label{eq:SDD}
    \dot{x} = \hat{f}(x) - \nabla V(x) \frac{\mathrm{ReLU}(\nabla V(x)^\top \hat{f}(x) + \alpha V(x))}{\|\nabla V(x)\|_2^2}.
\end{equation}
Under a smart parametrization of $V$ using input-convex neural networks~\cite{BA-LX-JZK:17}, the authors guarantee global exponential stability of~\eqref{eq:SDD} to the origin with rate $\alpha$. Note that the dynamics~\eqref{eq:SDD} are continuous, but not necessarily differentiable. Due to this potential nonsmoothness, it is theoretically unknown whether these dynamics are contracting (since Theorem~2.8 in~\cite{PG-SH-IM:21} requires at least some differentiability).

\textbf{Neural Contractive Dynamical Systems: } In~\cite{HBM-SH-GA-NF-GN-LR:23}, the authors enforce contractivity by directly enforcing negative definiteness of the Jacobian matrix of the vector field by parametrizing
\begin{equation}
    \jac{f}(x) = -(J_{\theta}(x)^\top J_{\theta}(x) + \epsilon I_d),
\end{equation}
where $\map{J_{\theta}}{\real^d}{\real^{d \times d}}$ is a neural network. Under this parametrization of the Jacobian, it is straightforward to see that~\eqref{eq:contraction-condition} holds with $c = \epsilon$ and $M(x) = I_d$ for all $x$. To recover the vector field from the Jacobian, the fundamental theorem of calculus for line integrals (which can be seen as a version of the mean-value theorem) is utilized to express
\begin{equation}\label{eq:NCDS}
    \dot{x} = f(x) = f(x_0) + \int_0^1 \jac{f}((1-t)x_0 + tx) (x - x_0) dt,
\end{equation}
where $x_0$ and $f(x_0)$ denote the initial condition and its velocity, respectively. 

To express richer dynamics, NCDS also consists of an encoder and decoder for a lower-dimensional latent space. The dynamics in the latent space are constrained to be~\eqref{eq:NCDS} and the encoder and decoder are parametrized using a VAE. Since, generally, the latent space is of lower dimension than the data space, the resulting dynamics are only guaranteed to be contracting on the submanifold defined by the image of the decoder.

\subsection{Universal Representation}
Our ELCD model equipped with a coupling layer-based transformation can represent any contracting dynamical system. It is known that any smooth nonlinear system with a hyperbolically stable fixed point can be exactly linearized inside its basin of attraction via a suitable diffeomorphism~\cite{YL-IM:13}. Effectively, our extended linearization parameterization is tasked with learning
the stable linear part, while the coupling layers aim to learn and
approximate this suitable diffeomorphism. As our ELCD parametrization can universally approximate any linear model, we simply need our coupling layer to universally approximate all diffeomorphisms. Indeed, \cite{TT-II-KT-MI-MS:20} proved that the coupling layers are universal approximators for diffeomorphisms.

\subsection{Additional Details on Datasets}\label{app:datasets}

In this section, we provide additional elaboration on the multi-link pendulum dataset and the Rosenbrock dataset. 

\textbf{Multi-link pendulum: } The multi-link pendulum is a simple mechanical system which is the interconnection of $n$ rigid links under the force of gravity. We specifically assume that there is damping on each of these links, which makes the resulting dynamical system stable and almost all trajectories converge to the downright equilibrium point. In this case, the state of the pendulum is described by the $n$ pairs $(\theta_i, \dot{\theta}_i)$, where $\theta_i$ is the angle of the $i$-th link and $\dot{\theta}_i$ is its angular velocity. Thus, this dynamical system is $2n$ dimensional. 

As was done in~\cite{GM-JZK:19}, we adapt the code from \url{http://jakevdp.github.io/blog/2017/03/08/triple-pendulum-chaos/} to generate data for $n$ links. 

\textbf{Rosenbrock datset: } The classical Rosenbrock function, see~\cite{HHR:60} 
\begin{equation}
    f(x,y) = (1 - x)^2 + 100(y-x^2)^2,
\end{equation}
is a nonconvex function with global minimum at $(1,1)$. Despite its apparent nonconvexity, it is known that a Riemannian gradient descent dynamics, is contracting with respect to a suitable Riemannian metric, see Example~3 in~\cite{PMW-JJES:20} for details. In optimization and in machine learning, the Rosenbrock function is a benchmark for the design of various optimizers and in the study of Riemannian optimization~\cite{NJ-LR:20,LK-PMW-JJES:23}.

For a higher-dimensional generalization of the Rosenbrock function, we consider 
\begin{equation}
    f(x) = \lambda_1(1- x_1)^2 + \sum_{i=2}^n \lambda_i (x_i - x_{i-1}^2)^2,
\end{equation}
where $\lambda_i > 0$ are parameters that affect the conditioning of the function. This generalization is still nonconvex and admits several saddle points. Note that the classic Rosenbrock corresponds to $n = 2, \lambda_1 = 1, \lambda_2 = 100$. The unique global minimum of this generalization is at $(1,1,\dots,1)$. The corresponding contracting Riemmanian gradient descent dynamics that find this global minimum are
\begin{equation}\label{eq:rosenbrock}
    \dot{x} = -G(x)^{-1} \nabla f(x),
\end{equation}
where $G(x) = \jac{\psi}(x)^\top \jac{\psi}(x)$, where $\map{\psi}{\real^n}{\real^n}$ is the mapping $\psi(x) = (\sqrt{\lambda_1} (1-x_1), \sqrt{\lambda_2}(x_2-x_1^2), \dots, \sqrt{\lambda_n}(x_n - x_{n-1}^2))$. Note that these dynamics were designed via a generalization of the procedure proposed in~\cite{PMW-JJES:20}. To generate data, we choose four initial points and evolved them according to ~\eqref{eq:rosenbrock}.

\subsection{Model Details} \label{sec:model}
For the encoder $\phi$, we use the composition of two coupling layers composed of  rational-quadratic spline. The splines cover the range $\{-10,10\}$ with 10 bins, and linearly extrapolated outside that range. The parameters of each spline are determined by residual networks, each containing two transform blocks with a hidden dimension of 30. A permutation and linear flow layer is placed before, in the middle, and after the two quadratic spline flow layers. For \model{}, we let the latent dimension size equal the data dimension size. $P_a$ and $P_s$ are implemented as two-layer neural networks with a hidden dimension of 16.

As the authors of NCDS have not yet made their code publicly available, we implemented NCDS to the best of our abilities. We used a latent dimension of size 2 for all datasets. This is in-line with the description in \citep{HBM-SH-GA-NF-GN-LR:23}, and necessary given the extra cost from integrating the Jacobian in higher-dimensional latent spaces. We use the same encoder $\phi$ as for \model{} with an added un-padding layer as the last function which removes the last $d-2$ dimensions.

\subsection{Training details}\label{sec:training}
All experiments are trained with a batch size of 100, for 100 epochs, with an Adam optimizer and learning rate of $10^{-3}$. All computation is done on CPUs.

NCDS \cite{HBM-SH-GA-NF-GN-LR:23} trained the model and the encoder to output $x_{t+1}$ from $x_t$ using a single Euler integration step: $x_{t+1} = \phi^{-1}(\phi(x_t) + dt * f(\phi(x_t))$. Their loss was the MSE between the predicted and true $x_{t+1}$. We empirically find better performance by training our model to directly predict the velocity vector $\dot{x}_{t+1} = \phi_J^{-1}(x_t)f(\phi(x_t))$, where $\phi_J^{-1}(x_t)$ is the Jacobian inverse of $\phi(x_t)$. Calculating the gradient of $\phi_J^{-1}(x_t)$ efficiently required a slight alteration to the original $\mathcal{M}$-flow \citep{JN-KC:20} code, which we have included in our repository.

\end{document}